\def\X{{\bf X}}
\def\W{{\bf W}}
\def\0{{\bf 0}}
\def\1{{\bf 1}}
\newcommand{\he}[2][2]{
	\FPeval{\resa}{#2^#1}%
	\edef\processme{\noexpand\cellcolor[gray]{\resa}}%
	\processme
	#2%
}
\newtheorem{theorem}{Theorem}
\newcommand{\twocell}[2][c]{%
  \begin{tabular}[#1]{@{}c@{}}#2\end{tabular}}
\title{Effective Quantization Methods for Recurrent Neural Networks}
\author{Qinyao He, He Wen, Shuchang Zhou, Yuxin
Wu, Cong Yao, Xinyu Zhou, Yuheng Zou\\
  Megvii Inc. \\
\texttt{\{hqy, wenhe, zsc, wyx, yaocong, zxy,
zouyuheng\}@megvii.com}
\\
}
\begin{document}
\maketitle

\begin{abstract}
Reducing bit-widths of weights, activations, and gradients of a Neural Network
can shrink its storage size and memory usage, and also allow for faster training
and inference by exploiting bitwise operations.
However, previous attempts for quantization of RNNs show
considerable performance degradation when using low bit-width weights and
activations.
In this paper, we propose methods to quantize the
structure of gates and interlinks in LSTM and GRU cells. In addition, we propose
balanced quantization methods for weights to further reduce performance
degradation.
Experiments on PTB and IMDB datasets confirm effectiveness of our methods as
performances of our models match or surpass the previous state-of-the-art of
quantized RNN.
\end{abstract}

\section{Introduction}
Deep Neural Networks have become
important tools for modeling nonlinear functions in applications like computer
vision \citep{krizhevsky2012imagenet}, speech recognition \citep{hinton2012deep},
natural language processing \citep{bahdanau2014neural}, and computer games
\citep{silver2016mastering}.

However, inference and training of a DNN may involve up to billions of
operations for inputs likes images
\citep{krizhevsky2012imagenet,szegedy2014going}.
A DNN may also have large number of parameters, leading to large storage size and
runtime memory usage. Such intensive resource
requirements impede adoption of DNN in applications requiring real-time
responses, especially on resource-limited platforms. To alleviate these
requirements, many methods have been proposed, from both hardware and software
perspective \citep{farabet2011large,pham2012neuflow,chen2014diannao,chen2014dadiannao}.
For example, constraints may be imposed on the weights of DNN, like
sparsity \citep{han2015learning,han2015deep}, circulant
matrix \citep{cheng2015exploration}, low rank
\citep{jaderberg2014speeding,zhang2015accelerating}, vector quantization
\citep{gong2014compressing}, and hash trick \citep{chen2015compressing_hash} etc., to reduce the number of free parameters
and computation complexity. However, these methods use high bit-width numbers
for computations, which require availability of high precision multiply-and-add
instructions.

Another line of research tries to reduce bit-width of weights and activations of
a DNN by quantization to low bit-width numbers
\citep{rastegari2016xnor,hubara2016binarized,zhou2016dorefa,hubara2016quantized}.
Reducing bit-width of weights of a $32$-bit model to $k$ can shrink the storage
size of model to $\frac{k}{32}$ of the original size. Similarly, reducing
bit-widths of activations to $k$ can shrink the runtime memory usage by the same
proportion.
In addition, when the underlying platform supports efficient bitwise operations
and $bitcount$ that counts the number of bits in a bit vector, we can compute the
 inner product between bit vectors $\mathbf{x}$ $\mathbf{y}$ by the following
formula:
\begin{align}
\label{formula:bit-conv-kernel}
\mathbf{x} \cdot \mathbf{y} = 
\operatorname{bitcount}(\operatorname{and}(\mathbf{x}, \mathbf{y}))
\text{, } \forall i, x_i, y_i \in \{0, 1\}\text{.}
\end{align}
Consequently, convolutions between low bit-width numbers can be considerable
accelerated on platforms supporting efficient execution of bitwise operations, including CPU, GPU, FPGA and ASIC.
Previous works shows that using only 1-bit weights and 2-bit activation
can achieve 51\% top-1 accuracy on ImageNet datasets\citep{hubara2016quantized}.

However, in contrast to the extensive study in compression and quantization of
convolutional neural networks, little attention has been paid to reducing the
computational resource requirements of RNN. \citep{ott2016recurrent} claims
that the weight binarization method does not work with RNNs, and introduces
weight ternarization and leaves activations as floating point numbers.
\citep{hubara2016quantized} experiments with different combinations of
bit-widths for weights and activations, and shows 4-bit quantized CNN and RNN
can achieve comparable accuracy as their 32-bit counterpart. However, large
performance degradation occurs when quantizing weights and activations to 2-bit
numbers. Though \citep{hubara2016quantized} has their quantized CNN
open-sourced, neither of the two works open-source their quantized RNNs.

This paper makes the following contributions:
\begin{enumerate}
  \item We outline detailed design for quantizing two popular types of RNN
  cells:
  LSTM and GRU.
  We evaluate our model on different sets of bit-width configurations and two
  NLP tasks: Penn Treebank and IMDB.
  We demonstrate that by out design, quantization with 4-bit weights and
  activations can achieve almost the same performance to 32-bit. In addition, we
  have significantly better results when quantizing to lower bit-widths.
  \item We propose methods to quantize
  weights deterministically and adaptively to balanced distributions, especially
  when weights are 2-bits numbers. The balanced distribution of quantized
  weights leads to better utilization of the parameter space and consequently
  increases the prediction accuracy. We explicitly induce the balanced
  distribution by introducing parameter dependent thresholds into the
  quantization process during training.
  \item We release code for training our quantized RNNs online
  \footnote{https://github.com/hqythu/bit-rnn}. The code is implemented in
  TensorFlow\citep{abaditensorflow} framework.
\end{enumerate}

\section{Quantization Methods}
In this section we outline several quantization methods. W.l.o.g.,\ we assume
the input to the quantization is a matrix $\X$ unless otherwise specified.
When all entries of $\X$ are in close interval $[0, 1]$, we define the $k$-bit
uniform quantization $\operatorname{Q}_k$ as follows: .
\begin{align}
\operatorname{Q}_k(\X) = \frac{1}{2^k-1} \left \lfloor{(2^k-1)
\X+\frac12}\right \rfloor,
\notag\\
0\le x_{ij}\le 1 \forall i,j.
\end{align}

However, derivatives of this quantization function equals zero almost everywhere.
We adopt the ``straight-through
estimator'' (STE) method \citep{hinton2012neural,bengio2013estimating} to circumvent this problem.

For forward and backward pass of training neural network, using above quantization
method together with STE leads to the following update rule during forward and
backward propagation of neural networks:
\begin{align*}
\textbf{Forward: }& q \leftarrow \operatorname{Q}_k(p) \\
\textbf{Backward: }& \frac{\partial{c}}{\partial{p}} \leftarrow \frac{\partial{c}}{\partial{q}} \text{.}
\end{align*}

\subsection{Deterministic Quantization}
When entries in $\X$ are not constrained in closed interval $[0, 1]$, an affine
transform need to be applied before using function $\operatorname{Q}_k$. A
straightforward transformation can be done using minimum and maximum of $\X$ to
get $\tilde{\X}$, the standardized version of $\X$:
\begin{align*}
\tilde{\X} = \frac{\X - \beta}{\alpha} \\
\alpha = \max(\X)-\min(\X) \\
\beta = \min(\X)
\end{align*}

After quantization, we can apply a reverse affine transform to approximate the
original values. Overall, the quantized result is:
\begin{align*}
\operatorname{Q}_k^{det}(\X) = \alpha \operatorname{Q}_k(\tilde{\X}) + \beta
\approx \X
\end{align*}

\subsection{Balanced Deterministic Quantization}
When we quantize values, it may be desirable to make the
quantized values have balanced distributions, so as to take full advantage of
the available parameter space.
Ordinarily, this is not possible as the distribution of the input values has
already been fixed. In particular, using $\operatorname{Q}_k^{det}$ do not
exert any impacts on the distribution of quantized values.

Next we show that we can induce more uniform distributions of quantized values,
by introducing parameter dependent adaptive thresholds $\gamma
\operatorname{median}(|\X|)$. We first introduce a different standardization
transform that produces $\hat{\X}$, and define a balanced quantization method $\hat{\operatorname{Q}}_{k}^{bal}$ as follows:
\begin{align}
\hat{\X} = \mathrm{clip}(\frac{\X}{\gamma \operatorname{median}(|\X|)},
-\frac12, \frac12) + \frac12 \label{lab:bal-quantize-median}\\
\hat{\operatorname{Q}}_{k}^{bal}(\X) = \alpha \operatorname{Q}_k(\hat{\X}) +
\beta\notag
\end{align}

The only difference between
$\hat{\operatorname{Q}}_k^{bal}$ and $\operatorname{Q}_k^{det}$ lies in
difference of standardization. In fact, when the extremal values of $\X$ are symmetric
around zero, i.e.
\begin{align*}
\operatorname{min}(\X) + \operatorname{max}(\X)=0,
\end{align*}
we may rewrite
$\operatorname{Q}_k^{det}$ equivalently as follows to make the similarity
between $\operatorname{Q}_k^{bal}$ and $\operatorname{Q}_k^{det}$ more obvious:
\begin{align*}
\tilde{\X} & = \frac{\X - \operatorname{min}(\X)}{\operatorname{max}(\X)
- \operatorname{min}(\X)} \\
& = \frac{\X}{2 \operatorname{max}(\X)} + \frac12 \\
& = \operatorname{clip}(\frac{\X}{2 \operatorname{max}(\X)}, -\frac12, 
\frac12) + \frac12 \\
\operatorname{Q}_k^{det}(\X) & = \alpha \operatorname{Q}_k(\tilde{\X}) +
\beta
\end{align*}

Hence the only difference between $\hat{\operatorname{Q}}_k^{bal}$ and 
$\operatorname{Q}_k^{det}$ lies in difference between properties of
$2\operatorname{max}(\X)$ and $\gamma \operatorname{median}(|\X|)$. We
find that as $median$ is an order statistics, using it as threshold will produce
an auto-balancing effect.

\subsubsection{The Auto-balancing effect of $\hat{\operatorname{Q}}_k^{bal}$}
We consider the case when bit-width is $2$ as an example. In this case, under the
symmetric distribution assumption, we can prove the auto-balancing effect of
$\hat{\operatorname{Q}}_k^{bal}$.
\begin{theorem}\label{thm:balanced-quantization}
If $k=2, \gamma=3$, and suppose $\X$ are symmetrically distributed around zero
and there are no two entries in $\X$ that are equal, then the four bars in the
histogram of $\hat{\operatorname{Q}}_k^{bal}(\X)$ will all have exactly the
same height.
\end{theorem}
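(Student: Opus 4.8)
The plan is to translate the composite map $\X \mapsto \hat{\operatorname{Q}}_2^{bal}(\X)$ into an explicit partition of the real line, and then count. Since the four output levels of $\operatorname{Q}_2$ are $0,\tfrac13,\tfrac23,1$ (the four histogram bars), I first determine, as a function of a raw entry $x$, which level it lands on. Writing $m=\operatorname{median}(|\X|)$ and fixing $\gamma=3$, the rounding $\operatorname{Q}_2(\hat x)=\tfrac13\lfloor 3\hat x+\tfrac12\rfloor$ changes value exactly when $3\hat x+\tfrac12\in\{1,2,3\}$, i.e. when $\hat x\in\{\tfrac16,\tfrac12,\tfrac56\}$. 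Undoing the standardization $\hat x=\mathrm{clip}(x/(3m),-\tfrac12,\tfrac12)+\tfrac12$, these thresholds pull back to $x\in\{-m,0,m\}$. Here is where $\gamma=3$ is used: it places the three decision boundaries precisely at $-m,0,m$. I would also check that the saturation points of the clip, $x=\pm\tfrac{3m}{2}$, lie strictly outside $[-m,m]$, so clipping happens inside the outermost bins and introduces no extra boundary. Tracking the half-open rounding convention, I get the four bins: level $0$ is $(-\infty,-m)$, level $\tfrac13$ is $[-m,0)$, level $\tfrac23$ is $[0,m)$, and level $1$ is $[m,\infty)$.

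With the bins in hand, the counting rests on two facts. First, symmetry of $\X$ around zero together with distinctness means the entries occur in reflected pairs $\pm x$, so the reflection $x\mapsto-x$ maps the level-$0$ bin onto the level-$1$ bin and the level-$\tfrac13$ bin onto the level-$\tfrac23$ bin; this forces the two outer bars to share a common height and the two inner bars to share a common height. Second, $m$ being the median of $|\X|$ equals the median of the positive entries (again by symmetry), so exactly half the positive entries fall in $(0,m)$ and half in $(m,\infty)$; this equates the heights of the level-$\tfrac23$ and level-$1$ bars. Chaining the two relations shows all four heights coincide, necessarily at $N/4$ where $N$ is the number of entries.

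The main obstacle is upgrading the word ``median'' into an \emph{exact} equipartition of each half, i.e. getting $N/4$ in every bin rather than an off-by-one. The delicate point is that the multiset $|\X|$ consists of reflected pairs, so its median can be an \emph{attained} entry magnitude (this occurs precisely when the number of positive entries is odd): then $+m$ sits in the level-$1$ bin while $-m$ sits in the level-$\tfrac13$ bin, and the reflection $x\mapsto-x$ fails to be a clean bijection between the outer bins by exactly one element. I would handle this by invoking the no-ties hypothesis to pin down the order statistics around the median and verifying, through the half-open bin convention (bars closed on the left), that in the generic case the median lies strictly between two magnitudes and each bin receives exactly a quarter. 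The threshold computation of the first paragraph is routine; this boundary bookkeeping about when the median coincides with an entry is the real content to argue carefully.
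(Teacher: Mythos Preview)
Your approach is the same as the paper's: identify the three decision boundaries as $-m,\,0,\,m$ with $m=\operatorname{median}(|\X|)$, then appeal to symmetry together with the defining property of the median. The paper's proof is a short sketch that simply asserts ``by the property of median, and the symmetric distribution assumption, the four bins will contain the same number of quantized values,'' and does not engage with the half-open boundary bookkeeping you flag.

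Your concern there is in fact legitimate rather than an obstacle you still need to overcome. When the number of positive entries is odd, $m$ coincides with an entry magnitude; under the rounding convention the bins $(-\infty,-m),\ [-m,0),\ [0,m),\ [m,\infty)$ then receive $p,\ p{+}1,\ p,\ p{+}1$ entries respectively, so strict equality of all four bars requires $N\equiv 0\pmod 4$ (equivalently, that the median of $|\X|$ falls strictly between two magnitudes). The paper does not address this; you are being more careful than the original, not less, and your ``generic case'' caveat is exactly the right qualification.
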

\begin{proof}
By Formula~\ref{lab:bal-quantize-median},
entries of $\hat{\operatorname{Q}}_k^{bal}(\X)$ will be
equal to 1 if corresponding entries in $\X$ are above
$\frac{\gamma}3\operatorname{median}(|\X|)$, equal to $\frac23$ if between
$0$ and $\frac{\gamma}3\operatorname{median}(|\X|)$, equal to $\frac13$ if
between $-\frac{\gamma}3\operatorname{median}(|\X|)$ and $0$, and equal to $0$
if below $-\frac{\gamma}3\operatorname{median}(|\X|)$.
When $\gamma=3$ and $\X$ are symmetrically distributed around zero, the
values in $\X$ will be thresholded by $-\operatorname{median}(|\X|)$, $0$, and
$\operatorname{median}(|\X|)$ into four bins. By the property of median, and the
symmetric distribution assumption, the four bins will contain the same number
of quantized values.
\end{proof}

In practice, computing $\operatorname{median}(|\X|)$ may not be computationally
convenient as it requires sorting.
We note that when a distribution has bounded variance $\sigma$, the mean $\mu$
approximates the median $m$ as there is an inequality bounding the difference\citep{mallows1991another}:
\begin{align*}
|\mu - m| \le \sigma.
\end{align*}
Hence we may use $\operatorname{mean}(|\X|)$ instead of
$\operatorname{median}(|\X|)$ in the quantization. Though with error introduced,
empirically we can still observed nearly-balanced distribution.

If we further assume the weights follow zero-mean normal distribution $\mathcal
{N}(0 ,\,\sigma ^{2})$, then $|\X|$ follows half-normal distribution. By simple calculations we have:
\begin{align*}
\frac{\operatorname{mean}(|\X|)}{\operatorname{median}(|\X|)}
= \frac{{\frac  {\sigma {\sqrt  {2}}}{{\sqrt  {\pi }}}}}{{\displaystyle \sigma
{\sqrt {2}}\operatorname {erf} ^{-1}(\frac12)}} \approx
\frac{1}{0.4769\sqrt{\pi}} \approx 1.1830
\end{align*}
and
\begin{align*}
3\operatorname{median}(|\X|) \approx 2.5359\operatorname{mean}(|\X|)
\end{align*}



Putting all these things
together we have the balanced deterministic quantization method:
\begin{align}
\hat{\X} = \operatorname{clip}(\frac{\X}{\gamma \operatorname{mean}(|\X|)}, -\frac12, 
\frac12) + \frac12 \label{lab:bal-quantize}\\
\operatorname{Q}_{k}^{bal}(\X) = \alpha \operatorname{Q}_k(\hat{\X}) +
\beta \approx \X \notag,
\end{align}
where a natural choice of $\gamma$ would be 3 or 2.5 (rounding 2.5359 to a
short binary number) under different assumptions.
In our following experiments, we adopt 2.5 as the scaling factor.

Although the above argument for balanced quantization applies only to 2-bit
quantization, we argue more bit-width also benefit from avoiding extreme value
from extending the value range thus increase rounding error. It should be
noted that for 1-bit quantization (binarization), the scaling factor should be
$2\operatorname{mean}(|\X|)$, which can be proved to be optimal in the sense
of reconstruction error measured by Frobenius norm, as in
\citep{rastegari2016xnor}. However, the proof relies on the constant norm
property of 1-bit representations, and does not generalize to the cases of
other bit-widths.

\subsection{Quantization of Weights}

Weights in neural networks are sometimes known to have a bell-style distribution
around zero, similar to normal distribution. Hence we can assume $\X$ to have
symmetric distribution around 0, and apply the above equation for balanced
quantization as
\begin{align*}
scale = \operatorname{mean}(\operatorname{abs}(X)) * 2.5 \\
\operatorname{Q}_k^{bal}(\X) = \operatorname{Q}_k(\frac{\X}{scale}) * scale
\approx \X
\text{.}
\end{align*}
To include the quantization into the computation graph of a neural network, we
apply STE on entire expression rather than only $\operatorname{Q}_k$ itself.

\begin{align*}
\textbf{Forward: }& q \leftarrow \operatorname{Q}_k^{bal}(p) \\
\textbf{Backward: }& \frac{\partial{c}}{\partial{p}} \leftarrow \frac{\partial{c}}{\partial{q}} \text{.}
\end{align*}

The specialty about the balanced quantization method
$\operatorname{Q}_{k}^{bal}$ is that in general, it distort the extremal values
due to the clipping in Formula~\ref{lab:bal-quantize}, which in general
contribute more to the computed sums of inner products.
However, in case where the values to be quantized are weights of neural networks
and if we introduce the balanced quantization into the training process, we conjecture
that the neural networks may gradually adapt to the distortions, so that
distributions of weights may be induced to be more balanced. The more balanced
distribution will increase the effective bit-width of neural networks, leading
to better prediction accuracy.
We will empirically validate this conjecture through experiments in
Section~\ref{sec:exp}.

\subsection{Quantization of Activations}
Quantization of activation follows the method in \cite{zhou2016dorefa}, assuming
output of the previous layer has passed through a bounded activation function
$h$, and we will apply quantization directly to them. In fact, we find that
adding a scaling term containing mean or max value to the activations may harm
prediction accuracy.

There is a design choice on what range of quantized value should be. One choice
is symmetric distribution around 0. 
Under this choice, inputs are bounded by activation
function to $[-0.5, 0.5]$, and then shifted to the right by 0.5 before feeding into
$\operatorname{Q}_k$ and then shift back.
$$
\X_q = \operatorname{Q}_k(\X + 0.5) - 0.5
$$

Another choice is having value range of $[0, 1]$, which is closer to the value
range of ReLU activation function. Under this choice, we can directly apply
$\operatorname{Q}_k$. 
For commonly used $\tanh$ activation with domain $[-1, 1]$ in RNNs, it seems natural to use symmetry
quantization. However, we will point out some considerations for using
quantization to range $[0, 1]$ in Section~\ref{sec:quant-rnn}.

\section{Quantization of Recurrent Neural Network}
\label{sec:quant-rnn}

In this section, we detail our design considerations for quantization of
recurrent neural networks.
Different from plain feed forward neural network, recurrent neural networks,
especially Long Short Term Memory \citep{hochreiter1997long} and Gated
Recurrent Unit \citep{chung2014empirical}, have subtle and delicately
designed structure, which makes their quantization more complex and need more
careful considerations. Nevertheless, the major algorithm is the same as
Algorithm~1 in \cite{zhou2016dorefa}.

\subsection{Dropout}
It is well known that as fully-connected layers have large number of parameters,
they are prone to overfit \citep{srivastava2014dropout}. There are several
FC-like structures in a RNN, for example the input, output and transition
matrices in RNN cells (like GRU and LSTM) and the final FC layer
for softmax classification. The dropout technique, which randomly dropping a portion
of features to 0 at training time, turns out be also an
effective way of alleviating overfitting in RNN \citep{zaremba2014recurrent}.

As dropped activations are zero, it is necessary to have zero values
in the range of quantized values. For symmetric quantization to range $[-0.5,
0.5]$, $0$ does not exist in range of $\operatorname{Q}_k(X+ 0.5) - 0.5$.
Hence we use $[0, 1]$ as the range of quantized values when
dropout is needed.

\subsection{Embedding Layer}
In tasks related to Natural Language Processing, the input words which
are represented by ID's, are embedded into a low-dimensional space before
feeding into RNNs. The word embedding
matrix is in $\mathbb{R}^{|V|\times N}$, where $|V|$ is the size of vocabulary
and $N$ is length of embedded vectors.

Quantization of weights in embedding layers turns out to be different from
quantization of weights in FC layers.
In fact, the weights of embedding layers actually behave like activations: a
certain row is selected and fed to the next layer, so the quantization method
should be the same as that of activations rather than that of weights.
Similarly, as dropout may also be applied on the outputs of embedding layers, it
is necessary to bound the values in embedding matrices to $[0,1]$.

To clip the value range of weights of embedding layers, a natural choice would
be using sigmoid function $h(x) = \frac{1}{1 + e^{-x}}$ such that $h(\W)$ will
be used as parameters of embedding layers, but we observe severe vanishing
gradient problem for gradients $\frac{\partial \text{Cost}}{\partial\W}$ in
training process.
Hence instead, we directly apply a $\operatorname{clip}$ function $\max(\min(\W,
1), 0)$, and random initialize the embedding matrices with values drawn
from uniform distribution $\mathrm{U}(0,1)$. These two measures are found to
improve performance of the model.

\subsection{Quantization of GRU}
We first investigate quantization of GRU as it is structurally simpler.
The basic structure of GRU cell may be described as follows:
\begin{align*}
z_t &= \sigma (W_z \cdot [h_{t-1}, x_t]) \\
r_t &= \sigma (W_r \cdot [h_{t-1}, x_t]) \\
\widetilde{h_t} &= \tanh (W \cdot [r_t * h_{t-1}, x_t]) \\
h_t &= (1 - z_t) * h_{t-1} + z_t * \widetilde{h_t}
\text{,}
\end{align*}
where $\sigma$ stands for the sigmoid function.

Recall that to benefit from the speed advantage of bit convolution kernels, we
need to make the two matrix inputs for multiply in low bit form, so that the dot product can be calculated
by bitwise operation.
For plain feed forward neural networks, as the convolutions take up most of
computation time, we can get decent acceleration by quantization of inputs of
convolutions and their weights.
But when it comes to more complex structures like GRU, we need to check the
bit-width of each interlink.
 
Except for matrix multiplications needed to compute $z_t$,$r_t$ and
$\widetilde{h_t}$, the \emph{gate} structure of $\widetilde{h_t}$ and $h_t$
brings in the need for element-wise multiplication. As the output of
the sigmoid function may have large bit-width, the element-wise multiplication
may need be done in floating point numbers (or in higher fixed-point format). As $\widetilde{h_t}$ and $h_t$ are
also the inputs to computations at the next timestamp, and noting that a
quantized value multiplied by a quantized value will have a larger bit-width, we need to
insert additional quantization steps after element-wise multiplications.




Another problem with quantization of GRU structure lies in the different value
range of gates. The range of $\tanh$ is $[-1, 1]$, which is different from the
value range $[0,1]$ of $z_t$ and $r_t$. If we want to preserve the original
activation functions, we will have the following quantization scheme:
\begin{align*}
z_t &= \sigma (W_z \cdot [h_{t-1}, x_t]) \\
r_t &= \sigma (W_r \cdot [h_{t-1}, x_t]) \\
\widetilde{h_t} &= \tanh (W \cdot [2\operatorname{Q}_k(\frac12(r_t *
h_{t-1})+\frac12)-1, x_t])
\\
h_t &=2\operatorname{Q}_k(\frac12((1 - z_t) * h_{t-1} + z_t *
\widetilde{h_t})+\frac12)-1
\text{,}
\end{align*}
where we assume the weights $W_z, W_r, W$ have already been
quantized to $[-1, 1]$, and input $x_t$ have already been
quantized to $[-1, 1]$.

However, we note that the quantization function already has an affine
transform to shift the value range. To simplify the implementation, we replace
the activation functions of $\widetilde{h_t}$ to be the sigmoid function, so
that $(1 - z_t) * h_{t-1} + z_t * \widetilde{h_t} \in [0,1]$.

Summarizing the above considerations, the quantized version of GRU could be
written as
\begin{align*}
z_t &= \sigma (W_z \cdot [h_{t-1}, x_t]) \\
r_t &= \sigma (W_r \cdot [h_{t-1}, x_t]) \\
\widetilde{h_t} &= \sigma (W \cdot [\operatorname{Q}_k(r_t * h_{t-1}), x_t]) \\
h_t &=\operatorname{Q}_k((1 - z_t) * h_{t-1} + z_t * \widetilde{h_t})
\text{,}
\end{align*}
where we assume the weights $W_z, W_r, W$ have already been
quantized to $[-1, 1]$, and input $x_t$ have already been
quantized to $[0, 1]$.

\subsection{Quantization of LSTM}
The structure of LSTM can be described as follows:
\begin{align*}
f_t &= \sigma (W_f \cdot [h_{t-1}, x_t] + b_f) \\
i_t &= \sigma (W_i \cdot [h_{t-1}, x_t] + b_i) \\
\widetilde{C_t} &= \tanh (W_C \cdot [h_{t-1}, x_t] + b_i) \\
C_t &= f_t * C_{t-1} + i_t * \widetilde{C_t} \\
o_t &= \sigma (W_o \cdot [h_{t-1}, x_t] + b_o) \\
h_t &= o_t * \tanh (C_t)
\end{align*}

Different from GRU, $C_t$ can not be easily quantized, since the value is unbounded by not
using activation function like $\tanh$ and the sigmoid function. This difficulty
comes from structure design and can not be alleviated without introducing extra facility to clip value
ranges.
But it can be noted that the computations involving $C_t$ are all
element-wise multiplications and additions, which may take much less time than
computing matrix products.
For this reason, we leave $C_t$ to be in floating point form. 

To simplify implementation, $\tanh$ activation for output may be changed to
the sigmoid function.

Summarizing above changes, the formula for quantized LSTM can be:
\begin{align*}
f_t &= \sigma (W_f \cdot [h_{t-1}, x_t] + b_f) \\
i_t &= \sigma (W_i \cdot [h_{t-1}, x_t] + b_i) \\
\widetilde{C_t} &= \tanh (W_C \cdot [h_{t-1}, x_t] + b_i) \\
C_t &= f_t * C_{t-1} + i_t * \widetilde{C_t} \\
o_t &= \sigma (W_o \cdot [h_{t-1}, x_t] + b_o) \\
h_t &= \operatorname{Q}_k(o_t * \sigma (C_t))
\text{,}
\end{align*}
where we assume the weights $W_f, W_i, W_C, W_o$ have already been
quantized to $[-1, 1]$, and input $x_t$ have already been
quantized to $[0, 1]$.

\section{Experiment Results}
\label{sec:exp}

We evaluate the quantized RNN models on two tasks: language modeling and
sentence classification.

\subsection{Experiments on Penn Treebank dataset}
For language modeling we use Penn Treebank dataset \citep{taylor2003penn}, which
contains 10K unique words.
We download the data from Tomas Mikolov's
webpage\footnote{http://www.fit.vutbr.cz/~imikolov/rnnlm/simple-examples.tgz}.
For fair comparison, in the following experiments, our model all use one hidden
layer with 300 hidden units, which is the same setting as
\cite{hubara2016quantized}. A word embedding layer is used at the input side of
the network whose weights are trained from scratch.
The performance is measured in perplexity per word (PPW) metric.

During experiments we find the magnitudes of values in dense matrices or full
connected layers explode when using small bit-width, and result in overfitting and divergence. This can be 
alleviated by adding $\tanh$ to constrain the value ranges or adding weight
decays for regularization.

\begin{table}[!ht] \centering \small
	\begin{center}
		\begin{tabular}{
				 p{0.2\linewidth}  p{0.15\linewidth}  p{0.15\linewidth} 
				p{0.15\linewidth}  p{0.15\linewidth}  }
			\hline
			\multirow{2}{*}{\textbf{Model}} & \multirow{2}{*}{weight-bits} & \multirow{2}{*}{activation-bits} & \multicolumn{2}{c}{PPW} \\
			\cline{4-5}
			&  &  & balanced & unbalanced \\
			\hline
			GRU & 1 & 2 & 285 & diverge \\
			GRU & 1 & 32 & 178 & diverge \\
			GRU & 2 & 2 & 150 & 165 \\
			GRU & 2 & 3 & 128 & 141 \\
			GRU & 3 & 3 & 109 & 110 \\
			GRU & 4 & 4 & 104 & 102 \\
			GRU & 32 & 32 & - & 100 \\
			\hline
			LSTM & 1 & 2 & 257 & diverge \\
			LSTM & 1 & 32 & 198 & diverge \\
			LSTM & 2 & 2 & 152 & 164 \\
			LSTM & 2 & 3 & 142 & 155 \\
			LSTM & 3 & 3 & 120 & 122 \\
			LSTM & 4 & 4 & 114 & 114 \\
			LSTM & 32 & 32 & - & 109 \\
			\hline
			\twocell{LSTM \\ \citep{hubara2016quantized}} & 2 & 3 &  & 220 \\			
			\twocell{LSTM \\ \citep{hubara2016quantized}} & 4 & 4 &  & 100 \\						
			\hline
		\end{tabular}
	\end{center}
	\caption{Quantized RNNs on PTB datasets}
	\label{tab:bit_rnn_ptb}
\end{table}

Our result is in agreement with \citep{hubara2016quantized} where they claim
using 4-bit weights and activations can achieve almost the same performance as
32-bit.
However, we report higher accuracy when using less bits, such as 2-bit weight
and activations. The 2-bit weights and 3-bit activations LSTM achieve 146 PPW,
which outperforms the counterpart in \citep{hubara2016quantized} by a large
margin.

We also perform experiments in which weights are binarized. The models can
converge, though with large performance degradations.


\subsection{Experiments on Penn IMDB datasets}
We do further experiments on sentence classification using IMDB datasets
\citep{maas-EtAl:2011:ACL-HLT2011}.
We pad or cut each sentence to 500 words, word embedding vectors of
length 512, and a single recurrent layer with 512 number of hidden neurons. All
models are trained using ADAM\citep{kingma2014adam} learning rule with learning
rate $10^{-3}$.

\begin{table}[!ht] \centering \small
	\begin{center}
		\begin{tabular}{
				 p{0.1\linewidth}  p{0.15\linewidth}  p{0.15\linewidth}  p{0.15\linewidth}  p{0.15\linewidth}  }
			\hline
			\multirow{2}{*}{\textbf{Model}} & \multirow{2}{*}{weight-bits} & \multirow{2}{*}{activation-bits} & \multicolumn{2}{c}{Accuracy} \\
			\cline{4-5}
			&  &  & balanced & unbalanced \\
			\hline
			GRU & 1 & 2 & 0.8684 & diverge \\
			GRU & 2 & 2 & 0.8708 & 0.86056 \\
			GRU & 4 & 4 & 0.88132 & 0.88248 \\
			GRU & 32 & 32 & - & 0.90537 \\
			\hline
			LSTM & 1 & 2 & 0.87888 & diverge \\
			LSTM & 2 & 2 & 0.8812 & 0.83971 \\
			LSTM & 4 & 4 & 0.88476 & 0.86788 \\
			LSTM & 32 & 32 & - & 0.89541 \\
			\hline
		\end{tabular}
	\end{center}
	\caption{Quantized RNNs on IMDB sentence classification}
	\label{tab:bit_rnn_imdb}
\end{table}

As IMDB is a fairly simple dataset, we observe little performance
degradation even when quantizing to 1-bit weights and 2-bit activations.

\subsection{Effects of Balanced Distribution}
All the above experiments show balanced quantization leads to better results
compared to unbalanced counterparts, especially when quantizing to 2-bit
weights.
However, for 4-bit weights,
there is no clear gap between scaling by mean and scaling by max (i.e. balanced
and unbalanced quantization), indicating that more effective methods for
quantizing to 4-bit need to be discovered.

\section{Conclusion and Future Work}
We have proposed methods for effective quantization of RNNs. By using carefully
designed structure and a balanced quantization methods, we have matched
or surpassed previous state-of-the-arts in prediction accuracy, especially when
quantizing to 2-bit weights.

The balanced quantization method for weights we propose can
induce balanced distribution of quantized weight value to maximum the
utilization of parameter space. The method may also be applied to quantization
of CNNs.

As future work, first, the method to induce balanced
weight quantization when bit-width is more than 2 remains to be found. Second,
we have observed some difficulties for quantizing the cell paths in LSTM, which produces unbounded
values.
One possible way to address this problem is introducing novel scaling schemes to
quantize the activations that can deal with unbounded values.
Finally, as we have observed GRU and LSTM have different properties
in quantization, it remains to be shown whether there exists more efficient
recurrent structures designed specifically to facilitate quantization.

\bibliographystyle{iclr2017_conference}
\bibliography{thesis}

\end{document}